\documentclass[10pt]{article} 
\usepackage[accepted]{tmlr}


\usepackage{amsmath,amsfonts,bm}









\def\eqref#1{equation~\ref{#1}}









\def\1{\bm{1}}










\DeclareMathAlphabet{\mathsfit}{\encodingdefault}{\sfdefault}{m}{sl}
\SetMathAlphabet{\mathsfit}{bold}{\encodingdefault}{\sfdefault}{bx}{n}













\DeclareMathOperator*{\argmax}{arg\,max}

\usepackage{hyperref}
\usepackage{url}

\usepackage{microtype}
\usepackage{graphicx}
\graphicspath{{figures/}{}}
\usepackage{subcaption}
\usepackage{booktabs}
\usepackage{adjustbox}
\usepackage{wrapfig}
\usepackage{tikz}
\usetikzlibrary{shapes.geometric, arrows, positioning}
\tikzstyle{state} = [rectangle, rounded corners, draw=black, thick, minimum width=1.5cm, minimum height=1cm, text centered]
\tikzstyle{action} = [draw=none, fill=none]
\tikzstyle{arrow} = [thick,->,>=stealth]

\usepackage{xcolor}
\usepackage{amsmath}
\usepackage{amssymb}
\usepackage{float}
\usepackage{mathtools}
\usepackage{amsthm}

\def\Sspace{\mathcal{S}}
\def\Aspace{\mathcal{A}}
\def\Dpref{\mathcal{D}_{\text{pref}}}

\usepackage[capitalize,noabbrev]{cleveref}

\theoremstyle{plain}
\newtheorem{theorem}{Theorem}[section]
\newtheorem{proposition}[theorem]{Proposition}

\newtheorem{corollary}[theorem]{Corollary}
\theoremstyle{definition}
\newtheorem{definition}[theorem]{Definition}

\theoremstyle{remark}
\newtheorem{remark}[theorem]{Remark}

\title{A Descriptive and Normative Theory of Human Beliefs in RLHF}


\author{\name Sylee Dandekar \email sdandekar@umass.edu \\
      \addr College of Information and Computer Sciences\\
      University of Massachusetts Amherst
      \AND
      \name Shripad V. Deshmukh \email svdeshmukh@umass.edu \\
      \addr College of Information and Computer Sciences\\
      University of Massachusetts Amherst
      \AND
      \name Frank Chiu \email fchiu@umass.edu \\
      \addr College of Information and Computer Sciences\\
      University of Massachusetts Amherst
      \AND
      \name W. Bradley Knox \email bradknox@cs.utexas.edu \\
      \addr Department of Computer Science\\
      The University of Texas at Austin
      \AND
      \name Scott Niekum \email sniekum@umass.edu \\
      \addr College of Information and Computer Sciences\\
      University of Massachusetts Amherst}



\begin{document}

\maketitle

\begin{abstract}
Human preferences in RLHF are typically modeled as a function of the human's reward function or corresponding optimal state-action values. In this work, we propose that human beliefs about the capabilities of the agent being trained also play a key role in preference generation. We examine two questions related to this hypothesis, one descriptive and one normative, respectively: Do human labelers' beliefs about agent capabilities affect the preferences that they provide? And what is the ideal set of beliefs about an agent---and resulting preferences---for humans to have? We propose a new preference model that incorporates human beliefs and provide a normative theory that bounds the error on the final learned policy based on the \textit{mismatch} between the human's beliefs and an idealized set of beliefs. We then confirm via a human study that beliefs about agent capabilities do, in fact, significantly affect preferences and can be influenced through simple interventions. Additionally, we empirically show through synthetic experiments that it is often suboptimal for human preference labelers to assume agent optimality. Collectively, these results theoretically and empirically demonstrate how reducing the mismatch between human beliefs and agent capabilities can lead to more performant RLHF and point toward new best practices for RLHF practitioners.
\end{abstract}

\section{Introduction}

Reinforcement learning from human feedback (RLHF) is one of the main tools used to align powerful AI systems~\citep{kaufmann2023survey}. Alignment is important for numerous reasons, such as minimizing unintentional harm, ensuring safety and control, and increasing public trust and legal compliance~\citep{ai_alignment_survey,amodei2016concreteproblemsaisafety}. In order to use RLHF for alignment more effectively, researchers require high-quality data from humans that express rational preferences. This begs the question: \textit{How should rationality of preferences be defined within RLHF?} Standard approaches to RLHF assume that humans provide preferences based only on what actually happened in the trajectories, i.e., the return-based interpretation~\citep{christiano2017deeprlhf,Ziegler2019, ouyang, trex}. This partial return-based approach does not consider events that \textit{could have} occurred while taking a risky action or what might happen after the end of the trajectory segment. ~\citet{Knox2024} show that the regret-based model addresses this by looking at the quality of actions taken, rather than just the summed reward of the observed outcomes.

\begin{figure}[ht]
    \centering
    \includegraphics[width=0.4\textwidth]{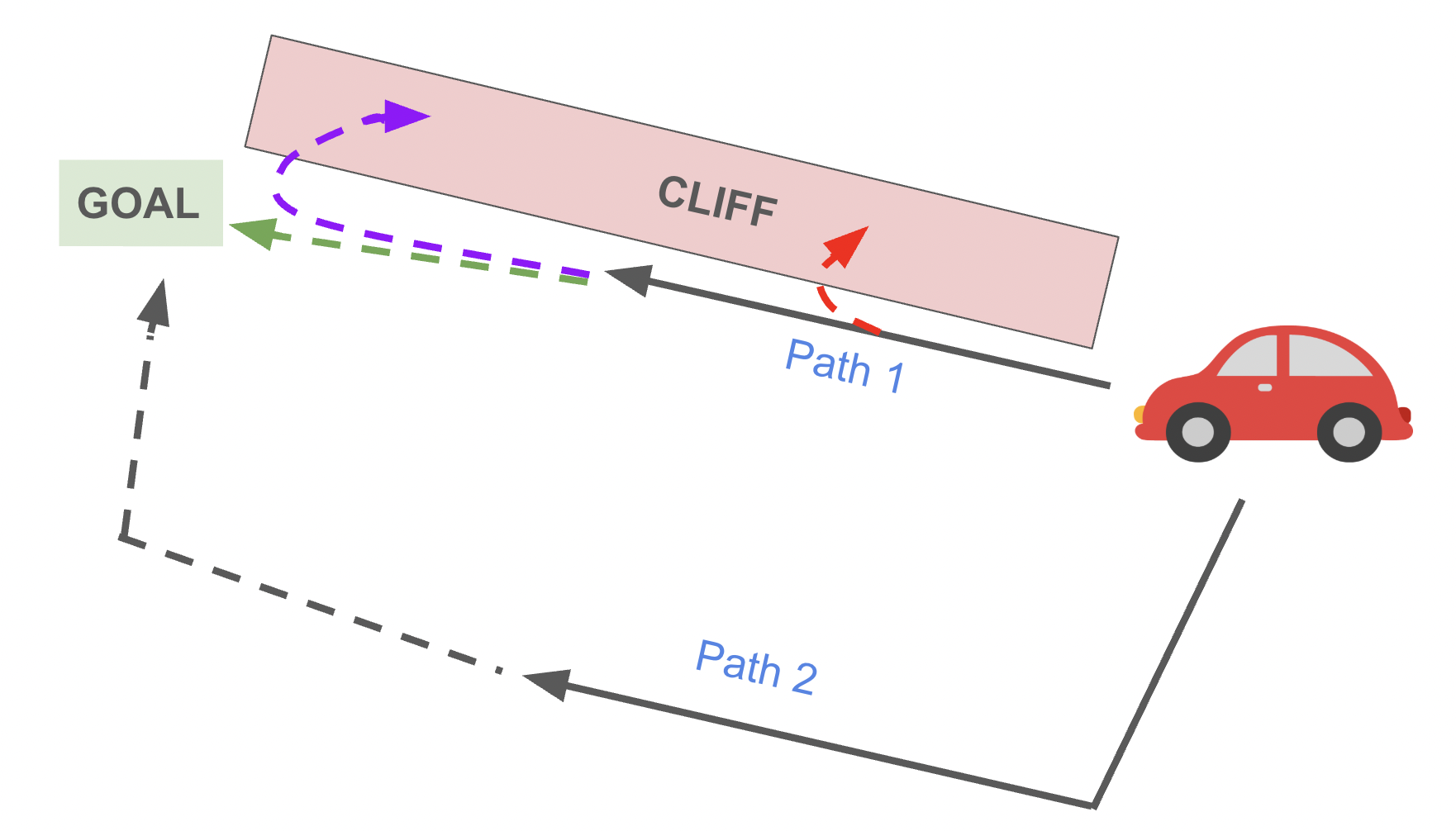}
    \caption{Illustration of scenario in which preference for a more optimal partial path can lead to a worse post-RLHF policy. \textit{(Path 1)}: The car drives along the edge of a cliff but straight to the destination. This path takes less time but requires greater capabilities (both during and after) to avoid catastrophic outcomes. \textit{(Path 2)}: The car takes a longer path far away from the cliff, reducing risk. The dotted lines indicate different possible paths that could occur when executing a post-RLHF policy: red indicates the possibility that the agent may drive off the cliff if it fails to perfectly imitate Path 1; purple indicates a suboptimal policy starting from the final state of Path 1 that also drives off the cliff; green indicates a policy that is able to safely reach the goal. If the post-RLHF policy would induce the red or purple trajectories under a Path 1 preference, then it would be better for the labeler to have preferred Path 2, despite the longer travel time.}
    \label{fig:agentcapabilityfig}
\end{figure}

However, even the regret-based approach may not fully capture the factors that humans may consider when giving preferences. The regret model assumes that humans judge the ``goodness" of an action as a function of the optimal advantage function $A^*$ that the agent should ideally be able to achieve under the assumed MDP, given enough data.
But what if humans do not assume optimality when judging action quality, but instead rely on their beliefs about the agent's capabilities---in other words $A^{\pi_{\textrm{belief}}}$ for some imagined policy $\pi_{\textrm{belief}}$ rather than $A^*$?

For example, consider the scenario in Figure \ref{fig:agentcapabilityfig}, in which an agent must choose between a shorter cliff-side path to a goal or a longer, safer path far away from the cliff. Path 1 is preferable if the agent can reproduce the trajectory noiselessly and also behave optimally afterwards. However, if the agent is suboptimal, then it risks falling off the cliff when taking this path or afterwards. Conversely, Path 2 is better for a suboptimal agent, but overly conservative for an optimal agent. Suppose that a human labeler provides a preference for Path 1 based on the incorrect belief that the agent will behave optimally post-RLHF. If the agent is trained on these preferences and is suboptimal post-RLHF, this can lead to catastrophic outcomes.

This motivates the questions: \textit{Should a rational preference labeler consider agent capabilities? And what is a normative ideal for labeler beliefs about agent capabilities?} There are a variety of factors that can affect an agent's capabilities. Factors such as limited training data, incomplete or noisy data, or policy parameterization could limit the space of learnable policies. This limitation of learnable policies, by extension, limits agent performance. Furthermore, if the agent is a robot, physical limitations may also limit agent performance beyond what a human might assume.
Given a fixed set of trajectory pairs, we show that the normative ideal in regret-based RLHF is for labelers to provide preferences with respect to a quantity that is related to the best possible post-RLHF policy under any labeling---as opposed to the standard assumption of absolute optimality.

Our contributions are as follows. (1) We illustrate the importance of beliefs about agent capabilities by showing an example where poor agent-labeler agreement leads to suboptimal post-RLHF policies.
(2) We define a normative ideal of beliefs and theoretically investigate the impact of deviations from this ideal on the expected return of the post-RLHF policy. (3) We empirically examine the effect of the magnitude of deviation from the ideal belief by manipulating it directly in synthetic experiments, showing that the highest performance typically results from beliefs closest to our theoretical normative ideal.
(4) Finally, we confirm in a human study that human beliefs about the agent's capabilities affect preferences. Specifically we show that it is possible to change human preferences in a statistically significant manner by changing their beliefs through priming. This study also points to potential paths forward for aligning labeler beliefs with agent capabilities.

\section{Preliminaries}
A Markov Decision Process (MDP)~\citep{mdp, SuttonBarto1998} is specified by a tuple $(\Sspace, \Aspace, P, \gamma, \mu, r)$. Here, $\Sspace$ is the set of possible states, and $\Aspace$ is the set of all possible actions. $P$ is the transition function $P: \Sspace \times \Aspace \rightarrow \Delta(\Sspace)$, a probability distribution over the next state given the current state and action. $\gamma$ is the discount factor, $\mu$ is the start state distribution, and $r:\Sspace \times \Aspace \times \Sspace \rightarrow \mathbb{R}$ is the scalar reward function. $r_t$ refers to the reward received at time $t$ for taking action $a_t$ at state $s_t$ and reaching $s_{t+1}$. Terminal states have a reward of 0, unless otherwise specified, upon reaching them and a reward of 0 forever after.
Throughout this paper, $r$ refers to the ground-truth reward function, from which preferences are generated.

A policy $\pi: \Sspace \rightarrow \Delta(\Aspace)$, specifies the probability of taking an action in a given state. $V^\pi_r $ is the state-value function for $\pi$ under the reward function $r$, defined as: \mbox{$V^\pi_r (s) = E_\pi[\sum_{t=0}^\infty \gamma^t r(s_t, a_t, s_{t+1})|s_0=s]$}. $Q^\pi_r $ is the state-action value function: \mbox{$Q^\pi_r (s,a) = E_\pi[r(s, a, s')+ \gamma V^\pi_r (s')]$}. The advantage function with respect to $\pi$ is defined as \mbox{$A^\pi_r (s, a)= Q^\pi_r (s, a) - V^\pi_r$}. \mbox{$J^\pi_r  = E_\pi[\sum_{t=0}^\infty \gamma^t r(s_t, a_t, s_{t+1})|s_0 \sim \mu]$} denotes the policy's expected discounted return. We use $A^\pi$, $V^\pi$, $Q^\pi$, and $J^\pi$ to denote $A^\pi_r $, $V^\pi_r $, $Q^\pi_r $, and $J^\pi_r$ respectively.
An optimal policy $\pi^*$ is a policy for which $V^{\pi^*}_r (s) \geq V^\pi_r(s)$ for all states and all policies. We use $A^*$, $V^*$, and $Q^*$ to denote $A^{\pi^*}_r$, $V^{\pi^*}_r$, and $Q^{\pi^*}_r$ respectively.

\noindent\textbf{Trajectories:} $\sigma$ refers to a trajectory with starting state $s^\sigma_0$. The length of a trajectory $|\sigma|$ refers to the number of transitions in the trajectory. Each trajectory has $|\sigma + 1|$ states and $|\sigma|$ actions. In the context of our setting, trajectories do not contain any reward information as rewards are unknown and are to be inferred from preferences.

\noindent\textbf{Preference Dataset:} We have a preference dataset \mbox{$\Dpref=\{(\sigma^+_i, \sigma^-_i) \mid i=1, 2, ..., n\}$} where each pair of trajectories $(\sigma^+_i, \sigma^-_i)$ are of equal length and $\sigma^+_i \succ \sigma^-_i$ denotes that the first trajectory is strictly preferred over the second.

\noindent\textbf{Preference Models:} The pairwise preferences in $\Dpref$ are assumed to be sampled from an underlying distribution $P(\sigma^+ \succ \sigma^-)$ which is modeled using preference model  (e.g. the partial-return model~\citep{christiano2017deeprlhf} or the regret model~\citep{Knox2024}) obeying ground truth reward $r$. The partial-return preference model is given by:
\begin{align}
    P(\sigma^+ \succ \sigma^-) = \frac{\exp(\alpha \sum_{{_\sigma^+}}\gamma^t r_t)}{\exp( \alpha \sum_{{_\sigma^+}}\gamma^t r_t) + \exp( \alpha \sum_{{_\sigma^-}}\gamma^t r_t)}
\end{align}
and the regret-based preference model is given by :
\begin{align}
& P^*(\sigma^+ \succ \sigma^-) = \frac{\exp(\alpha\sum_{\sigma^+}\gamma^t A^{*}(s_t, a_t))}{\exp(\alpha\sum_{\sigma^+} \gamma^t A^{*}(s_t, a_t)) + \exp(\alpha\sum_{\sigma_{-}} \gamma^t A^{*}(s_t, a_t))}
\label{eq:regretpreference}
\end{align}
where $\alpha$ is an inverse temperature coefficient.

\section{Related Work}

\noindent\textbf{Reinforcement Learning from Human Feedback.}
 RLHF methods traditionally follow a two-step learning framework: (1) learning a reward function using preference data and (2) using RL to perform policy optimization ~\citep{christiano2017deeprlhf,ouyang,trex}. Direct methods remove the RL step from this traditional approach by directly learning policies from preferences but make the same preference modeling assumptions as standard return-based RLHF ~\citep{Munos2024,dpo,cpl}.

\noindent\textbf{Modeling human beliefs in learning.}
Prior work has aimed to model human beliefs in learning settings beyond RLHF.
~\citet{Reddy2018} state that the reason demonstrators deviate from near-optimal actions is because of a misunderstanding of the environment dynamics and propose an algorithm to estimate these misunderstandings.
~\citet{Gong2020} assume that humans do not maintain a correct belief about dynamics when giving preferences and propose a method to infer both the reward function and the human's beliefs about dynamics.
~\citet{marklund2024choicepartialtrajectoriesdisentangling} aim to separate goals from potentially incorrect human beliefs about environmental transition dynamics.
~\citet{chan2021humanirrationalitybadgood} demonstrate that human irrationality, when correctly modeled, can be an asset to reward inference.

\noindent\textbf{Biases in human data collection.}
While RLHF assumes that humans offering preferences do so rationally and in alignment with their long-term objectives, biases in human survey data have been studied across various fields of research \citep{kaufmann2023survey, Kahneman2003, koszegi2008}. It is widely recognized that human decisions can be swayed by biases due to irrationality, incomplete, or inaccurate information \citep{gilovich2002}. ~\citet{koszegi2008} state that mistakes are systematic and can be revealed by behavior and show that previously seen small amounts of non representative data can bias preferences.  Previous psychology-aware RLHF literature investigate the impact of biasing effects on preferences. ~\citet{Lichtenstein_Slovic_2006} discuss how preferences can be formed at the same time as the process of preference elicitation. Personality, emotions, social connections, and decision biases (such as decoy effects, serial position effects, anchoring effects, framing effects, and group think) can also affect preferences ~\citep{Tran2021, Atas2021}. ~\citet{Ziegler2019} discuss the importance of alignment between researcher goals and the labeler's labels on open ended summarization and sentiment analysis tasks.

\section{Theory: Effects of agent-labeler disagreement}

In this section, we aim to bound the impact of agent-labeler disagreement on expected returns of an RLHF-trained policy. We first formally define the notions of agent capability belief and agent-labeler disagreement. We then present an example MDP where agent-labeler disagreement leads to agents learning suboptimal policies. Finally, we provide a formal bound on regret of the post-RLHF policy loss as a function of agent-labeler disagreement.

Under the regret-based model of preferences, the labeler is assumed to provide preferences based on the advantage of actions under the optimal advantage function $A^*$.
However, a real-world agent's
performance after RLHF will not typically be optimal and instead will be limited by the amount of preference data, training time, or the learning algorithm itself.
We will show that if humans do, in fact, generate preferences via the regret preference model, it can lead to catastrophically bad post-RLHF policies. This is due to the assumption of optimality following the trajectory segments for which preferences are given. For example, in Figure \ref{fig:agentcapabilityfig}, the actual advantage of choosing Path 1 depends on whether or not the car will behave optimally in future time steps during execution.
Thus, when providing preferences, we posit that the quality of an action ought to be related to the agent's post-RLHF advantage function, rather than assuming optimality.

Formally, we denote the labeler's estimate of the agent's post-RLHF policy as $\pi_{\text{belief}}$.
$Q^{\pi_{\text{belief}}}$ and $A^{\pi_{\text{belief}}}$ denote the corresponding state-action value function  and advantage function respectively.
We propose a new preference model, similar to the regret-based preference model, but which aims to  explicitly capture how humans might incorporate such beliefs. We call this new preference model the \textbf{belief-based preference model}:
\begin{align}
\label{eqn:beliefBT}
& P^{\text{belief}}(\sigma^+ \succ \sigma^-) =\frac{\exp(\alpha\sum_{\sigma^+}\gamma^t A^{\pi_{\text{belief}}}(s_t, a_t))}{\exp(\alpha\sum_{\sigma^+} \gamma^t A^{\pi_{\text{belief}}}(s_t, a_t)) + \exp(\alpha\sum_{\sigma_{-}} \gamma^t A^{\pi_{\text{belief}}}(s_t, a_t))}
\end{align}
where $\alpha$ is an inverse temperature coefficient.
The remainder of the paper will examine the theoretical and empirical consequences of humans providing belief-based preferences under this model, as well as establish evidence via a human study that human preferences are, in fact, influenced by beliefs about agent capabilities. In order to do so, we must first formally define agent capabilities, as well as agent-labeler disagreement.

We assume that the human has a belief about the capabilities of the agent, which they use to generate preferences. We define this belief over capabilities in terms of a Q-function that represents the quality of an imagined agent's policy:
\begin{definition}\label{def:agent_capability}
 \textit{\textbf{Agent capability belief}} is defined as the state-action value function, $Q^{\pi_{\text{belief}}}$, used to generate preferences in the belief-based preference model proposed in equation~\ref{eqn:beliefBT}, via the associated $A^{\pi_{\text{belief}}}$.
 \end{definition}
In other words, preferences are generated based on the assumption that, after the completion of the preferred partial trajectory, the agent will follow $\pi_\text{belief}$, whose state-action value function is $Q^{\pi_{\text{belief}}}$.

One might then ask: what $\pi_\text{belief}$ might a human imagine when providing preferences? The standard regret-based model assumes that the human imagines optimal behavior, but our human study in Section \ref{sec:human} demonstrates that preferences are subject to priming effects, indicating that the human's beliefs about agent capabilities may be more closely related to the perceived current policy of the agent, or a prior over similar agents formed from past experiences, or perhaps an estimated performance ceiling for such agents.

Thus, another key question is: What is the ideal $Q^{\pi_{\text{belief}}}$ for a human to hold? Assume that the human is asked to provide preferences over a fixed, finite set of pairs of partial trajectories. The normative ideal is for the human to provide preferences in a manner that results in the highest expected return of the post-RLHF policy. We denote any policy that achieves this maximum as $\pi_{\text{post}}^*$. Note that this policy is not necessarily optimal for the MDP; it is simply the best that can be achieved after an RLHF step under any assignment of preferences over the fixed set of trajectory pairs. Further, there exist a set of beliefs that lead to such maximizing policies when preferences are generated according to them. We denote these as $Q^{\pi_\text{belief}^{*}}$. This optimal belief may require oracle knowledge to generate (e.g. environmental dynamics or learning rule), but it is the normative ideal.

We stress that $Q^{\pi_\text{belief}^{*}}$ is a theoretical reference point: we use it to define agent-labeler disagreement and to derive the bound in Section~\ref{sec:formalbounds}, and do not propose computing or estimating it in practice. The direction of interpretation also matters: $Q^{\pi_\text{belief}^{*}}$ is not derived from $\pi_{\text{post}}^*$; rather, it represents accurate beliefs about the agent's capabilities, and it is labelers holding such beliefs that yields preferences recovering $\pi_{\text{post}}^*$.

Finally, note that if an infinite number of (noiseless) preferences are to be given and any policy is representable by the agent, this implies that the labeler ought to have a belief of optimal behavior, normatively; in this case, $\pi_{\text{post}}^*$ is only suboptimal when a finite number of preferences are given. However, if preferences are noisy or the agent cannot represent an optimal policy (e.g. a weak function approximator), then this may not be true.

We now define agent-labeler disagreement, which captures the more realistic scenario of a human labeler providing preferences under some agent capability belief $Q^{\pi_\text{belief}}$ that is different from normative ideal, $Q^{\pi_\text{belief}^{*}}$.

\begin{definition}
We define the \textit{\textbf{agent-labeler disagreement}} at a state-action pair as the magnitude of difference between an optimal belief $Q^{\pi_{\text{belief}}^*}$ and a human labeler's belief $Q^{\pi_{\text{belief}}}$ given by \mbox{$|Q^{\pi_{\text{belief}}^*}(s, a) - Q^{\pi_{\text{belief}}}(s, a)|.$} When there exist multiple $Q^{\pi_{\text{belief}}^*}$, the minimum disagreement is used.
\end{definition}

\subsection{A case study} \label{sec:casestudy}

Here we present an example to illustrate the effect of agent-labeler disagreement. Consider the simple MDP shown in Figure \ref{fig:lottery}, which is adapted from an example in ~\citet{Knox2024} to capture the risk of poor decision-making \textit{after} an action.\footnote{ Note that decision-making itself may not appear to be a risk from the agent's perspective, since the agent controls its own actions at subsequent states. However, even deterministically chosen actions can be risky from the labeler's perspective because the labeler is only able to influence the agent at the current state.}
$s_0$ is the start state, and arrows show deterministic transitions towards the three terminal states on the far right. All rewards are 0 unless otherwise noted.

\begin{figure}[t]
\centering
\resizebox{0.35\linewidth}{!}{
\begin{tikzpicture}[node distance=2cm]

\node[state] (s0) at (0, 0) {$s_0$};
\node[state] (ssafe) at (2, 2) {$s_{\text{safe}}$};
\node[state] (srisk) at (2, -2) {$s_{\text{risk}}$};
\node[state] (sneutral) at (7, 2) {$s_{\text{neutral}}$};
\node[state] (slose) at (7, -1) {$s_{\text{lose}}$};
\node[state] (swin) at (7, -3) {$s_{\text{win}}$};
\draw[arrow] (s0) -- node[action, sloped, above] {$a_{\text{safe}}$} (ssafe);
\draw[arrow] (s0) -- node[action, sloped, above] {$a_{\text{risk}}$} (srisk);
\draw[arrow] (ssafe) -- node[action, sloped, above] {$r=0$} (sneutral);
\draw[arrow] (srisk) -- node[action, sloped, above] {$a_{\text{lose}}$, $r=-10$} (slose);
\draw[arrow] (srisk) -- node[action, sloped, below] {$a_{\text{win}}$, $r=+10$} (swin);

\end{tikzpicture}
}
\caption{An MDP in which a safe and a risky action are available. If \mbox{$A^{\pi_{\text{belief}}}(s_0, a_{\text{risk}}) > A^{\pi_{\text{belief}}}(s_0, a_{\text{safe}})$} when \mbox{$A^{\pi_{\text{belief}}^*}(s_0, a_{\text{risk}}) \leq A^{\pi_{\text{belief}}^*}(s_0, a_{\text{safe}})$}, labelers will erroneously prefer the partial trajectory \mbox{$(s_0, a_{\text{risk}}, s_{\text{risk}}) $} over \mbox{$(s_0, a_{\text{safe}}, s_{\text{safe}})$} under the belief-based preference model in Equation \ref{eqn:beliefBT}.}
\label{fig:lottery}
\end{figure}

Suppose a labeler is asked to provide preferences between two partial trajectories: $(s_0, a_{\text{risk}}, s_{\text{risk}}) $ and $(s_0, a_{\text{safe}}, s_\text{safe})$, and their beliefs  are such that $A^{\pi_{\text{belief}}}(s_0, a_{\text{risk}}) > A^{\pi_{\text{belief}}}(s_0, a_{\text{safe}})$.
Then, under the belief-based preference model, they would tend to label $(s_0, a_{\text{risk}}, s_{\text{risk}})\succ (s_0, a_{\text{safe}}, s_{\text{safe}})$.
If the agent-labeler disagreement is such that $A^{\pi_{\text{belief}}^*}(s_0, a_{\text{risk}}) \leq A^{\pi_{\text{belief}}^*}(s_0, a_{\text{safe}})$—which would occur if $\pi_{\text{belief}}^*(s_{\text{risk}},a_{\text{lose}})>0.5$—then obeying the preference above results in losing more often than winning during execution, resulting in a learned policy that has lower expected return than taking $a_{\text{safe}}$.

Conversely, suppose the labeler's beliefs are such that $A^{\pi_{\text{belief}}}(s_0, a_{\text{risk}}) < A^{\pi_{\text{belief}}}(s_0, a_{\text{safe}})$. Then under the belief-based preference model, they would prefer $(s_0, a_{\text{safe}}, s_{\text{safe}})$ over $(s_0, a_{\text{risk}}, s_{\text{risk}})$. Obeying this preference results in a return of 0 instead of a higher return during execution if $A^{\pi_{\text{belief}}^*}(s_0, a_{\text{risk}}) > A^{\pi_{\text{belief}}^*}(s_0, a_{\text{safe}})$. Thus, we can see that agent-labeler disagreements can lead to suboptimal preferences and  resulting policies.

\subsection{Formal bounds on impact of agent-labeler disagreements} \label{sec:formalbounds}

We now bound the impact on the expected return of the RLHF-trained policy under agent-labeler disagreements. We work in the tabular, pairwise-comparison setting standard in theoretical analyses of RLHF~\citep{zhu2023principled, wang2023rlhf}, which admits a clean characterization of how belief disagreement translates into performance loss. We first examine a setting where the labeler provides a preference under an agent-labeler disagreement at a single state-action pair, and later discuss how this analysis extends to multiple disagreements.

\begin{theorem}\label{thm:lower_bound_on_error}
Consider an RLHF setup where a human labeler is provided with a finite number of pairs of single-transition trajectories to label. Let $Q^{\pi_{\text{belief}}^*}$ be an optimal agent capability belief (i.e. the normative ideal belief) for generating preferences using belief-based model (Eqn \ref{eqn:beliefBT}) over these pairs of trajectories, and let $\pi_{\text{post}}^*$ be the corresponding post-RLHF policy.
Let $Q^{\pi_{\text{belief}}}$ be the human labeler's belief with an agent-labeler disagreement of $\delta$ at a state-action pair $(s', a')$ w.r.t. $Q^{\pi_{\text{belief}}^*}$. Let the post-RLHF policy trained with preferences given under $Q^{\pi_{\text{belief}}}$ be $\pi_{\text{post}}^\delta$.
Assume that preferences are noiseless under the belief-based model (i.e. $\alpha \rightarrow \infty$).
Further assume that the policy is tabular and RLHF produces a deterministic policy that respects all the preferences. We then have:
    \begin{equation}
        J_{\pi_{\text{post}}^{\delta}} \geq J_{\pi_{\text{post}}^*} -\frac{\delta}{1 - \gamma},
    \end{equation}
 where $J_{\pi_{\text{post}}^*}$ and $J_{\pi_{\text{post}}^{\delta}}$ are the expected returns of $\pi_{\text{post}}^*$ and $\pi_{\text{post}}^{\delta}$ respectively.
\end{theorem}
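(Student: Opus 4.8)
The plan is to reduce the comparison between $\pi_{\text{post}}^\delta$ and $\pi_{\text{post}}^*$ to a discrepancy at the single state $s'$ and then invoke the performance difference lemma (PDL) to convert a per-state advantage gap into a return gap carrying the characteristic $1/(1-\gamma)$ horizon factor. First I would record what the hypotheses buy us: for a pair of one-step trajectories the sum $\sum_{\sigma}\gamma^t A^{\pi_{\text{belief}}}(s_t,a_t)$ collapses to $A^{\pi_{\text{belief}}}(s_0,a_0)$, and $\alpha\to\infty$ makes the belief-based model deterministically prefer the trajectory with the larger belief-advantage. Thus the preference data is exactly an ordering of state-action pairs by $A^{\pi_{\text{belief}}}$, and the tabular deterministic policy that ``respects'' it selects, at each relevant state, the highest-ranked available action.

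Next I would argue that the two learned policies can be taken to agree everywhere except possibly at $s'$. Since $Q^{\pi_{\text{belief}}}$ and $Q^{\pi_{\text{belief}}^*}$ coincide except at $(s',a')$, the induced advantages differ only at state $s'$: at every other state both the $Q$ entries and the baseline $V$ are unchanged. Within $s'$ itself, the induced shift in $V^{\pi_{\text{belief}}}(s')$ is common to all actions and cancels in within-state comparisons, so the only action whose rank can move is $a'$. Writing $b^*=\pi_{\text{post}}^*(s')$ and $b^\delta=\pi_{\text{post}}^\delta(s')$, a flip means the human-belief ranking of $b^\delta$ against $b^*$ is opposite to the ideal one; subtracting the two ranking inequalities (the common $V(s')$ and the unchanged $Q(s',b^*)$ cancel) and using $|Q^{\pi_{\text{belief}}}(s',a')-Q^{\pi_{\text{belief}}^*}(s',a')|=\delta$ yields $-\delta \leq Q^{\pi_{\text{belief}}^*}(s',b^\delta)-Q^{\pi_{\text{belief}}^*}(s',b^*)\leq 0$.

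I would then read the normative ideal self-consistently: the best belief is to anticipate the actual best achievable policy, so $\pi_{\text{belief}}^*=\pi_{\text{post}}^*$ and $A^{\pi_{\text{belief}}^*}=A^{\pi_{\text{post}}^*}$. With this identification and $A^{\pi_{\text{post}}^*}(s',b^*)=0$ (a deterministic policy has zero advantage against its own action), the previous display becomes $A^{\pi_{\text{post}}^*}(s',b^\delta)\in[-\delta,0]$. Applying the PDL with $\pi=\pi_{\text{post}}^\delta$ and $\pi'=\pi_{\text{post}}^*$ gives $J_{\pi_{\text{post}}^\delta}-J_{\pi_{\text{post}}^*}=\frac{1}{1-\gamma}\,\E_{s\sim d^{\pi_{\text{post}}^\delta}}\!\big[A^{\pi_{\text{post}}^*}(s,\pi_{\text{post}}^\delta(s))\big]$, where $d^{\pi_{\text{post}}^\delta}$ is the (probability-normalized) discounted state-visitation distribution. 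Every term with $s\neq s'$ vanishes because the policies agree there, leaving only $\frac{1}{1-\gamma}\,d^{\pi_{\text{post}}^\delta}(s')\,A^{\pi_{\text{post}}^*}(s',b^\delta)\geq\frac{1}{1-\gamma}(-\delta)$, since $d^{\pi_{\text{post}}^\delta}(s')\in[0,1]$ and the advantage is at least $-\delta$; rearranging yields the stated bound.

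The flip bound and the PDL summation are routine; the two steps demanding care are (i) justifying that the two learned policies coincide off $s'$, which is exactly where the tabular, deterministic, and ``respects all preferences'' hypotheses are needed, together with the cancellation of the common $V(s')$ shift so that only the rank of $a'$ can change; and (ii) the identification $A^{\pi_{\text{belief}}^*}=A^{\pi_{\text{post}}^*}$, i.e. that the optimal belief is the self-consistent one correctly predicting $\pi_{\text{post}}^*$, without which the belief-advantage bound would not transfer to true return. I expect (i) to be the main obstacle, since ``respecting'' a finite preference set need not pin down the policy at unconstrained states; I would therefore state explicitly that RLHF returns policies that agree wherever the two preference sets agree, and invoke the PDL with $d^{\pi_{\text{post}}^\delta}$ normalized as a probability measure so that $d^{\pi_{\text{post}}^\delta}(s')\leq 1$.
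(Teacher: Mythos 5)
Your proposal follows essentially the same route as the paper's own proof: reduce to a policy difference at the single state $s'$, apply the performance difference lemma of Kakade \& Langford, bound the advantage gap $Q^{\pi_{\text{post}}^*}(s',\pi_{\text{post}}^\delta(s')) - Q^{\pi_{\text{post}}^*}(s',\pi_{\text{post}}^*(s'))$ below by $-\delta$, and use $0 \leq d^{\pi_{\text{post}}^\delta}_{\mu}(s') \leq 1$. The two steps you flag as needing care — (i) that the learned policies coincide off $s'$, and (ii) the identification $A^{\pi_{\text{belief}}^*} = A^{\pi_{\text{post}}^*}$ so the belief-space disagreement transfers to true returns — are precisely the points the paper handles implicitly (it asserts $\pi_{\text{post}}(s)=\argmax_a Q^{\pi_{\text{belief}}}(s,a)$ and justifies the $\delta$-substitution only by citing the ``definition of agent-labeler disagreement''), so your more explicit treatment is a finer-grained version of the same argument rather than a different one.
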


\begin{proof}
Consider preferences given over single-transition trajectories via the belief-based model (Eqn.~\ref{eqn:beliefBT}).
When RLHF respects the given preferences, resulting in a deterministic post-RLHF policy $\pi_{\text{post}}$, we have $\pi_{\text{post}}(s) = \argmax_{a} Q^{\pi_{\text{belief}}}(s, a)$. Under perfect agent-labeler agreement, we are given the post-RLHF policy as $\pi_{\text{post}}^*$.

Now, we examine the impact of agent-labeler disagreement at $(s', a')$ on the post-RLHF policy. We have disagreement of
\begin{equation}
    \delta = \Big|Q^{\pi_{\text{belief}}^*}(s', a') - Q^{\pi_{\text{belief}}}(s', a')\Big|.
\end{equation}

When this disagreement grows such that $\delta > Q^{\pi_{\text{belief}}}(s', \pi_{\text{post}}^*(s')) - Q^{\pi_{\text{belief}}}(s', a')$, action $a'$ becomes the most preferred action. This results in a new post-RLHF policy $\pi_{\text{post}}^\delta$ as follows:
\begin{align}
    \label{eq:policy_change}
    \pi_{\text{post}}^\delta(s) = \begin{cases}
        \pi_{\text{post}}^*(s) & \text{if } s \neq s' \\
        a' & \text{if } s = s'.
    \end{cases}
\end{align}

Following~\citet{Kakade2002ApproximatelyOA}, we calculate the difference between $J_{\pi_{\text{post}}^\delta}$ and $J_{\pi_{\text{post}}^*}$.
\begin{align}\label{eq:performance_difference_lemma}
    J_{\pi_{\text{post}}^\delta} - J_{\pi_{\text{post}}^*} =  \frac{1}{1 - \gamma} \mathbb{E}_{s \sim d^{\pi_{\text{post}}^\delta}_{\mu}} \mathbb{E}_{a \sim \pi_{\text{post}}^\delta} \left[ A^{\pi_{\text{post}}^*}(s, a) \right],
\end{align}
where $d^{\pi_{\text{post}}^\delta}_{\mu}$ is the discounted steady-state distribution of the MDP under policy $\pi_{\text{post}}^\delta$ and start state distribution $\mu$, and $A^{\pi_{\text{post}}^*}(s, a)$ is the advantage function of policy $\pi_{\text{post}}^*$. We can further simplify equation~\ref{eq:performance_difference_lemma} considering the deterministic nature of $\pi_{\text{post}}^*$ and $\pi_{\text{post}}^\delta$ as:
\begin{align}
    &J_{\pi_{\text{post}}^\delta} - J_{\pi_{\text{post}}^*} =  \frac{1}{1 - \gamma} \mathbb{E}_{s \sim d^{\pi_{\text{post}}^\delta}_{\mu}} \left[Q^{\pi_{\text{post}}^*}(s, \pi_{\text{post}}^\delta(s)) - Q^{\pi_{\text{post}}^*}(s, \pi_{\text{post}}^*(s)) \right]\label{eq:deterministic_policy_simplied}
\end{align}

Note that the two policies differ only at a state $s'$. This allows us to further simplify as:
\begin{align}
    = \frac{d^{\pi_{\text{post}}^\delta}_{\mu}(s')}{1 - \gamma}  \left[ Q^{\pi_{\text{post}}^*}(s', \pi_{\text{post}}^\delta(s')) - Q^{\pi_{\text{post}}^*}(s', \pi_{\text{post}}^*(s'))\right] \geq \frac{d^{\pi_{\text{post}}^\delta}_{\mu}(s')}{1 - \gamma}  \left(- \delta \right) \geq -\frac{\delta}{1 - \gamma}
\end{align}
The inequalities follow from the agent-labeler disagreement definition and $0\leq d^{\pi_{\text{post}}^\delta}_{\mu}(s) \leq 1$.
\end{proof}
Next, we show how this proof extends to multiple agent-labeler disagreements.

\begin{corollary}\label{thm:multiple_disagreements_bound}
    When there are multiple agent-labeler disagreements, $\{\delta_1, \delta_2, \cdots, \delta_k\}$, the Theorem~\ref{thm:lower_bound_on_error} can be extended as: $J_{\pi_{\text{post}}^\delta}  \geq J_{\pi_{\text{post}}^*} - \frac{\max \{\delta_1, \delta_2, \cdots, \delta_k\}}{1 - \gamma}$.
\end{corollary}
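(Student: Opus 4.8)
The plan is to reuse the performance-difference decomposition from the proof of Theorem~\ref{thm:lower_bound_on_error}, now applied to a pair of policies that may differ at several states rather than at a single one. As in the single-disagreement case, RLHF respects the noiseless belief-based preferences, so $\pi_{\text{post}}^\delta(s) = \argmax_a Q^{\pi_{\text{belief}}}(s,a)$, where now $Q^{\pi_{\text{belief}}}$ deviates from the ideal belief $Q^{\pi_{\text{belief}}^*}$ at the $k$ state-action pairs carrying disagreements $\delta_1,\dots,\delta_k$. Since the belief values are unchanged away from those pairs, the two deterministic policies can only differ on the set $\mathcal{S}_{\text{diff}}$ of states at which at least one disagreement occurs, so $|\mathcal{S}_{\text{diff}}| \le k$.

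First I would invoke the performance difference lemma exactly as in Equations~\ref{eq:performance_difference_lemma}--\ref{eq:deterministic_policy_simplied}, which, after discarding the vanishing terms at states where the two policies agree, gives
\begin{align}
J_{\pi_{\text{post}}^\delta} - J_{\pi_{\text{post}}^*} = \frac{1}{1-\gamma}\sum_{s \in \mathcal{S}_{\text{diff}}} d^{\pi_{\text{post}}^\delta}_{\mu}(s)\Big[ Q^{\pi_{\text{post}}^*}(s,\pi_{\text{post}}^\delta(s)) - Q^{\pi_{\text{post}}^*}(s,\pi_{\text{post}}^*(s)) \Big]. \nonumber
\end{align}
Next, at each differing state $s$ I would apply the same single-disagreement bound used in Inequality~\ref{eq:proof1substitutingepsilon} to the responsible disagreement $\delta_{(s)}$, yielding $Q^{\pi_{\text{post}}^*}(s,\pi_{\text{post}}^\delta(s)) \ge Q^{\pi_{\text{post}}^*}(s,\pi_{\text{post}}^*(s)) - \delta_{(s)}$, so that each bracketed term is at least $-\delta_{(s)}$.

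The decisive step---and the place where the bound collapses to a maximum rather than a sum---is to collect these per-state bounds and exploit that the discounted visitation measure is a probability distribution. Substituting the per-state bounds gives
\begin{align}
J_{\pi_{\text{post}}^\delta} - J_{\pi_{\text{post}}^*} \ge -\frac{1}{1-\gamma}\sum_{s\in\mathcal{S}_{\text{diff}}} d^{\pi_{\text{post}}^\delta}_{\mu}(s)\,\delta_{(s)} \ge -\frac{\max_i \delta_i}{1-\gamma}\sum_{s\in\mathcal{S}_{\text{diff}}} d^{\pi_{\text{post}}^\delta}_{\mu}(s), \nonumber
\end{align}
and since $\sum_{s\in\mathcal{S}_{\text{diff}}} d^{\pi_{\text{post}}^\delta}_{\mu}(s) \le \sum_{s} d^{\pi_{\text{post}}^\delta}_{\mu}(s) = 1$, the claimed inequality $J_{\pi_{\text{post}}^\delta} \ge J_{\pi_{\text{post}}^*} - \max_i\delta_i/(1-\gamma)$ follows immediately.

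I expect the main obstacle to be conceptual rather than computational: one must justify why the error scales with the single largest disagreement and not with the accumulated sum over all $k$ disagreements. The resolution is that, although a disagreement at each state can independently degrade the policy, the agent occupies one state at a time, so the visitation weights $d^{\pi_{\text{post}}^\delta}_{\mu}(s)$ across the differing states compete for a total probability mass of $1$; the worst case concentrates this mass on the state carrying the largest disagreement. A secondary point I would verify carefully is that $\mathcal{S}_{\text{diff}}$ is genuinely contained in the set of disagreement states---that is, that perturbing belief values only at the listed pairs $(s_i',a_i')$ cannot alter the $\argmax$ at any other state---which holds because $Q^{\pi_{\text{belief}}}$ is modified exclusively at those pairs.
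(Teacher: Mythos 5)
Your proof is correct, but it takes a genuinely different and substantially more rigorous route than the paper's own argument. The paper's proof (Appendix~\ref{sec:appendix_theory}) is purely verbal: it \emph{interprets} Theorem~\ref{thm:lower_bound_on_error} as saying that, in the worst case, the agent repeatedly suffers a per-step loss of $\delta$ over the whole horizon, and then asserts that with multiple disagreements the worst case is the agent being ``stuck'' at the state carrying $\max\{\delta_1,\dots,\delta_k\}$, which yields the geometric sum $\max_i \delta_i/(1-\gamma)$. It never invokes the performance difference lemma for the multi-disagreement policy pair, and it never explains formally why the bound involves the maximum rather than the accumulated sum of the $\delta_i$. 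Your proof supplies exactly this missing justification: you re-run the decomposition of Equations~\ref{eq:performance_difference_lemma}--\ref{eq:deterministic_policy_simplied} over the set of differing states, apply the per-state bound from Inequality~\ref{eq:proof1substitutingepsilon}, and then use the fact that $\sum_s d^{\pi_{\text{post}}^\delta}_{\mu}(s) = 1$ so the visitation weights on the differing states compete for unit probability mass---which is precisely why the sum collapses to a max. One caveat worth flagging (it affects your per-state step and the paper's claim equally): applying Inequality~\ref{eq:proof1substitutingepsilon} at a state $s$ tacitly assumes that the belief value of the \emph{ideal} action $\pi_{\text{post}}^*(s)$ is unperturbed at that state. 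If two disagreements occur at the same state---one at the action the perturbed policy selects and one at the ideal action---the per-state gap can be as large as the sum of the two, degrading the bound to $2\max_i\delta_i/(1-\gamma)$; ruling this out requires assuming at most one disagreement per state, an assumption neither you nor the paper makes explicit.
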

\begin{proof}
    The proof follows similar reasoning as the single disagreement case, considering the worst-case scenario where the maximum disagreement dominates the performance loss.
\end{proof}

\begin{remark}[Beyond the noiseless, deterministic setting]\label{rem:generalization}
The noiseless-preference and deterministic-policy assumptions correspond to the $\alpha \to \infty$ limit of the belief-based model. For finite $\alpha$, the Luce choice axiom~\citep{luce1959} instead yields a stochastic softmax policy, and the same Performance Difference Lemma argument gives a bound that depends additionally on the maximum advantage magnitude at the disagreement state; it recovers Theorem~\ref{thm:lower_bound_on_error} as $\alpha \to \infty$. We give this generalized bound and its proof in Appendix~\ref{sec:appendix_theory}. A more substantial relaxation is to move beyond the tabular setting: with function approximation, a belief perturbation at one state can shift the policy elsewhere through shared parameters, so our localized single-state argument no longer applies directly. We leave a bound for this setting to future work, and study it empirically in Section~\ref{sec:gridworldvaryingepsilon}.
\end{remark}

\section{Empirical Study of Impact of Agent-Labeler Disagreements on Policy Learning} \label{sec:gridworldvaryingepsilon}

In the prior section, we bounded the effects of agent-labeler disagreement for simple tabular policies without function approximation or generalization. In this section, we aim to empirically study this phenomenon for a broader class of policy representations that can exhibit generalization, as well as limit the set of representable policies. As noted earlier, when infinite noiseless preferences are given, then the normative ideal is for the labeler to give preferences with respect to an optimal policy, as long as the agent can represent and execute an optimal policy. However, if maximum agent performance is limited in some manner---for example, due to a weak function approximator, regularized learning rule, or noisy actuators that don't faithfully execute the commanded action---then the normative ideal is to give preferences with respect to the best achievable policy.

To study the impact of agent-labeler disagreement in a more realistic setting in which the agent is limited,
we design an experiment using synthetic data in a gridworld domain. The agent can only represent epsilon-greedy policies for a particular value of epsilon. This policy noise is intended to serve as a stand-in for the various types of performance-capping limitations discussed above, and has the advantage of being easy to systematically manipulate. We model the labeler's beliefs very simply as a belief over epsilon; they assume an optimal policy within the representable epsilon-greedy class. Thus, agent-labeler disagreement arises due to incorrect beliefs about the value of epsilon. We hypothesize that performance of the post-RLHF policy will degrade as the labeler's belief over epsilon diverges from the true epsilon.
\begin{wrapfigure}{r}{0.25\textwidth}
\centering
\resizebox{0.9\linewidth}{!}{
\begin{tikzpicture}
\definecolor{goal}{HTML}{DFF2BF}
\definecolor{danger}{HTML}{FFBABA}
\definecolor{start}{HTML}{FFF2CC}

\foreach \x in {0,...,6} {
    \foreach \y in {0,...,6} {
        \draw (\x,\y) rectangle (\x+1,\y+1);
        \node at (\x+0.5,\y+0.5) {-1};
    }
}

\fill[goal] (6,6) rectangle (7,7);
\node at (6.5,6.5) {\textbf{+200}};

\fill[danger] (6,3) rectangle (7,4);
\node at (6.5,3.5) {\textbf{-200}};
\fill[danger] (6,2) rectangle (7,3);
\node at (6.5,2.5) {\textbf{-200}};

\fill[start] (6,0) rectangle (7,1);
\node at (6.5,0.5) {\textbf{-1}};
\end{tikzpicture}
}
\caption{\small 7x7 GridWorld with start state (in yellow), two terminal cliff states (in red)  and one terminal goal state (in green). In each cell, we mark the reward incurred for reaching the state.}
\label{fig:gridworldenv}
\vspace{-30pt}
\end{wrapfigure}

We first collect 100 random trajectories (enough to guarantee a nearly optimal-within-class post-RLHF policy) of the agent traversing the gridworld environment that terminate either in the goal state or in the cliff state. We denote $\epsilon$ as the $\epsilon$-greedy noise impacting the post-RLHF policy of the agent and $\epsilon'$ as the $\epsilon$-greedy noise assumed by the labeler and determining the labeler's preferences. The values of $\epsilon$ and $\epsilon'$ may be different.
We denote the labeler's belief about agent's capability $Q^{\pi_{\text{belief}}^{\epsilon'}}(s,a)$. We denote the post-RLHF agent capability as $Q^{\pi_{\text{post}}^{\epsilon}}(s,a)$. We vary the values of $\epsilon$ and $\epsilon'$ to show the impact of agent-labeler disagreements on the post-RLHF policy's expected return.

We use $Q^{\pi_{\text{belief}}^{\epsilon'}}(s,a)$ to generate synthetic preference data using the belief-based preference model shown in Equation \ref{eqn:beliefBT}. We use these preferences to train the agent using Contrastive Preference Learning (CPL)~\citep{cpl}, a scalable algorithm for regret-based RLHF.

\begin{table}[t]
\centering
\caption{The effect of mismatch in belief in the agent capabilities (rows) and the agent's actual capability (columns). Highest returns generally correspond to the most aligned beliefs, shown along the diagonal. The error intervals show the 95\% confidence interval. }
\label{tab:gridworld_results}
\begin{adjustbox}{width=.6\columnwidth}
\begin{tabular}{ccccc}
\toprule
$\epsilon$ noise on agent capability  $\rightarrow$ & 0.0 & 0.1 & 0.3   & 0.5 \\ \midrule
$\epsilon'$ noise on labeler's belief $\downarrow$ &     &       &      &  \\ \midrule
0.0        & $\mathbf{7.92 \pm 0.05}$ & $\mathbf{3.11 \pm 0.15}$ & $-4.07 \pm 0.21$ & $-8.45 \pm 0.26$ \\
0.1       & $3.49 \pm 0.07$ & $1.47 \pm 0.17$ & $-4.92 \pm 0.22$ & $-10.91 \pm 0.32$  \\
0.3       & $1.71 \pm 0.05$ & $-0.88 \pm 0.14$ & $\mathbf{-3.70 \pm 0.19}$ & $-8.18 \pm 0.26$  \\
0.5        & $3.47 \pm 0.06$ & $-0.32 \pm 0.11$ & $-4.14 \pm 0.22$ & $\mathbf{-7.36 \pm 0.25}$ \\ \bottomrule
\end{tabular}
\end{adjustbox}
\end{table}

Table~\ref{tab:gridworld_results} shows the average returns of the post-RLHF policies. The highest return generally comes when the $\epsilon'$-greedy noise assumed by labeler is closest to the $\epsilon$-greedy noise impacting the agent's policy. We see that for lower values of epsilon ($\epsilon = 0 \;\&\; 0.1$), preferences given under the optimal advantage function ($\epsilon' = 0$) provide the highest returns. However, this strategy becomes a hindrance for effective RLHF as the action noise increases, as seen when $\epsilon = 0.3 \;\&\; 0.5$. This is due to the fact that the post-RLHF policy learns to walk near the dangerous cliffs, often falling off, since the labeler assumed too low of a noise level. Walking near the cliffs is only optimal when the agent has a very low epsilon value.
These results serve as a proof of concept for how agent-labeler disagreements can arise in practice and how they can be exacerbated by agent limitations that may be unknown to the labeler.

\section{Agent-Labeler Disagreements in Human Preference Collection}
\label{sec:human}

 Now that we have established that agent-labeler disagreements pose a problem in RLHF, in this section, we demonstrate that human beliefs about agent capabilities do, in fact, significantly affect the preferences given by human labelers. This is done via a human study in which priming effects are used to change participants' beliefs about the agent's capabilities. Additionally, this result indicates that it is possible in theory to change human preferences to be better aligned with agent capabilities.

\noindent \textbf{Domain:} We choose a self-driving car domain, simulated using the CARLA driving simulator~\citep{carla} to collect trajectories to use for human preference collection. Since the rules and trade-offs involved in driving are something that many people already have a strong intuition for, the choice of self-driving car setup allows us to keep the filtering of the participants minimal.

\noindent \textbf{Priming:} Before beginning data collection, we randomly assign participants to one of three conditions: (1) no priming (control condition), (2) safe priming (confidence increasing condition), and (3) unsafe priming (confidence decreasing condition).  Both safe and unsafe priming videos start from the same starting state to avoid unintended biasing effects.
Participants in the safe priming group are shown a video where the car obeys all traffic laws, avoids obstacles, and successfully overtakes a slower car, whereas the participants in the unsafe priming group are shown a video where the car drifts across lanes and crashes into obstacles.

\noindent \textbf{Preference elicitation:} Participants are asked to provide preferences over 15 randomly chosen pairs of video data. All data pairs on which preferences are collected show legal driving. Within each pair, the car is shown to be driving on the same path for the same amount of time. In each pair, one video shows faster and more time saving behavior, which if executed imperfectly (during or after the trajectory) could lead to failure states, such as hitting a pedestrian, another car, or breaking traffic laws; the other shows more safety-conscious behavior that requires more time to execute, but for which it is easier to avoid failure states.
Each pair of trajectories is designed to make the participants consider both: (a) how a learning agent might behave in the same states shown in the trajectories  and (b) what the agent might do after the end of each trajectory.

All data pairs are randomly shuffled, and the order in which each pair of data is shown is also randomly shuffled to avoid ordering bias during preference collection. In each question, both videos in each pair are presented next to each other.

\noindent \textbf{Instructions for the participants:}
Participants in the control group are given a standard preference collection instruction explaining that the objective of the survey is to improve a self-driving car using their preferences over behavioral data. We chose to frame this language for this group to indicate that their preferences were not meant to be a rating of the trajectories themselves, but rather a learning signal for improving a self driving car's policy. This was designed to lead them to reflect less on the trajectories themselves and more on their beliefs about how a learning agent might behave in similar or future states. The participants in the safe and unsafe priming groups are given a similar instruction \textit{with an addition of a video showcasing the car's driving behaviors}.
To encourage the participants to think deeply about the car's capabilities, they are then asked about which skills they think would be useful for the car to learn to become a more effective driver.

\noindent \textbf{Post-processing of collected data:} To maintain a high standard while finalizing the data, we construct a participant filtering step that ensures that participants are paying attention when providing preferences. First, we show a pair of videos: one in which the car drives normally and one in which the car crashes into a guardrail while making a turn. Additionally, we show a pair of videos in which one car drives normally and the other runs a red light. We remove all preference data from participants who do not strongly prefer the attention check videos that show good driving, while also reporting that they were extremely confident.

In total, we collected data from 259 participants. The data filtering and quality checks allowed 146 (46 safe priming, 50 unsafe priming, 50 no priming) participants' data to be used for analysis. We further balanced the data in each group to avoid unintended impacts on the statistical analysis by randomly removing 4 entries from both the unsafe and no priming groups.  The final dataset contains 46 participants in each of the three groups.

To the best of our knowledge, there is not a standard nonparametric statistical test that can test for statistical significance when there are: (1) more than two groups; (2) for which the data is Likert; (3) and for which each participant gives repeated measures data. Instead, we used the average of each participant's responses, rather than individual question responses, to create independent data that is amenable to standard statistical tests.
To reduce noise due to participant confusion, uncertainty, or misunderstandings, we used only those responses where participants reported extremely high confidence in their answers.

We briefly justify these analysis choices. \textit{Averaging repeated responses.}
A natural alternative is a mixed-effects ordinal regression, such as a cumulative link mixed model (CLMM), which can directly model repeated ordinal measures. However, CLMMs rely on the proportional-odds assumption---that priming shifts responses uniformly across all thresholds of the Likert scale. In our setting priming may plausibly affect the extremes of the scale differently from its middle, and we did not have strong grounds to assume proportional odds holds; moreover, with 46 participants per group after filtering, the effective sample size is small for stably estimating the random-effects variance components of a CLMM. Averaging each participant's responses instead yields a single independent summary statistic per participant, which enables the Kruskal--Wallis test~\citep{KruskalWallis}---a well-established nonparametric test with minimal distributional assumptions. This discards some within-participant information, but we preferred a test whose assumptions we could be confident about over a more powerful test whose assumptions may be violated.
\textit{Retaining only high-confidence responses.} In our experience with online platforms such as Prolific, responses given with low self-reported confidence frequently reflect participant disengagement or rushed responding rather than genuine uncertainty about the preference---an effect compounded here, where participants must watch and compare two videos before answering. The confidence filter thus complements the attention-check filtering described above as a quality-control measure.
\textit{Robustness.} Our analysis is conservative by design: the Bonferroni correction used in the Dunn's test is among the most stringent multiple-comparison corrections available, and the Kruskal--Wallis test makes minimal distributional assumptions. We still observe significance at the 5\% level (Table~\ref{tab:dunnstest}, $p=0.015$ between safe and unsafe priming), and a less conservative analysis would be expected to only strengthen these results.

\noindent \textbf{Qualitative results:}

Immediately after priming, we asked participants to provide free-text feedback directed specifically at the car's capabilities---namely, which skills they think would be useful for the car to learn to become a more effective driver. Because this question is directed at the agent's capabilities rather than driving risk in general, the responses serve as a probe of participants' beliefs about the agent. To quantify these responses, we used GPT-4 \citep{gpt4} to conduct a sentiment analysis for each response and provide a 1--10 star rating, using the following prompt:
\begin{quote}
\textit{``Analyze the following feedback about a car's driving skills and rate its safety from 1 to 10 stars. Are there any skills this car need to learn to be a more effective driver? Only consider the presence of any safety concerns and suggestions for improvement with small consideration for factors unrelated to safety: Feedback: [PARTICIPANT'S FEEDBACK] Provide a structured analysis with reasons for the assigned score.''}
\end{quote}
GPT-4 was the strongest model available at the time our study was conducted. We chose an LLM over human sentiment annotators for greater consistency, as LLM-based annotations have been shown to achieve higher internal consistency than human annotators for sentiment analysis tasks \citep{bojic2025comparing, zhang2024sentiment}; rating the safety sentiment of short free-text responses on a numerical scale is straightforward relative to the benchmarks studied in that work.

\begin{figure}[H]
\centering
\includegraphics[width=0.4\linewidth]{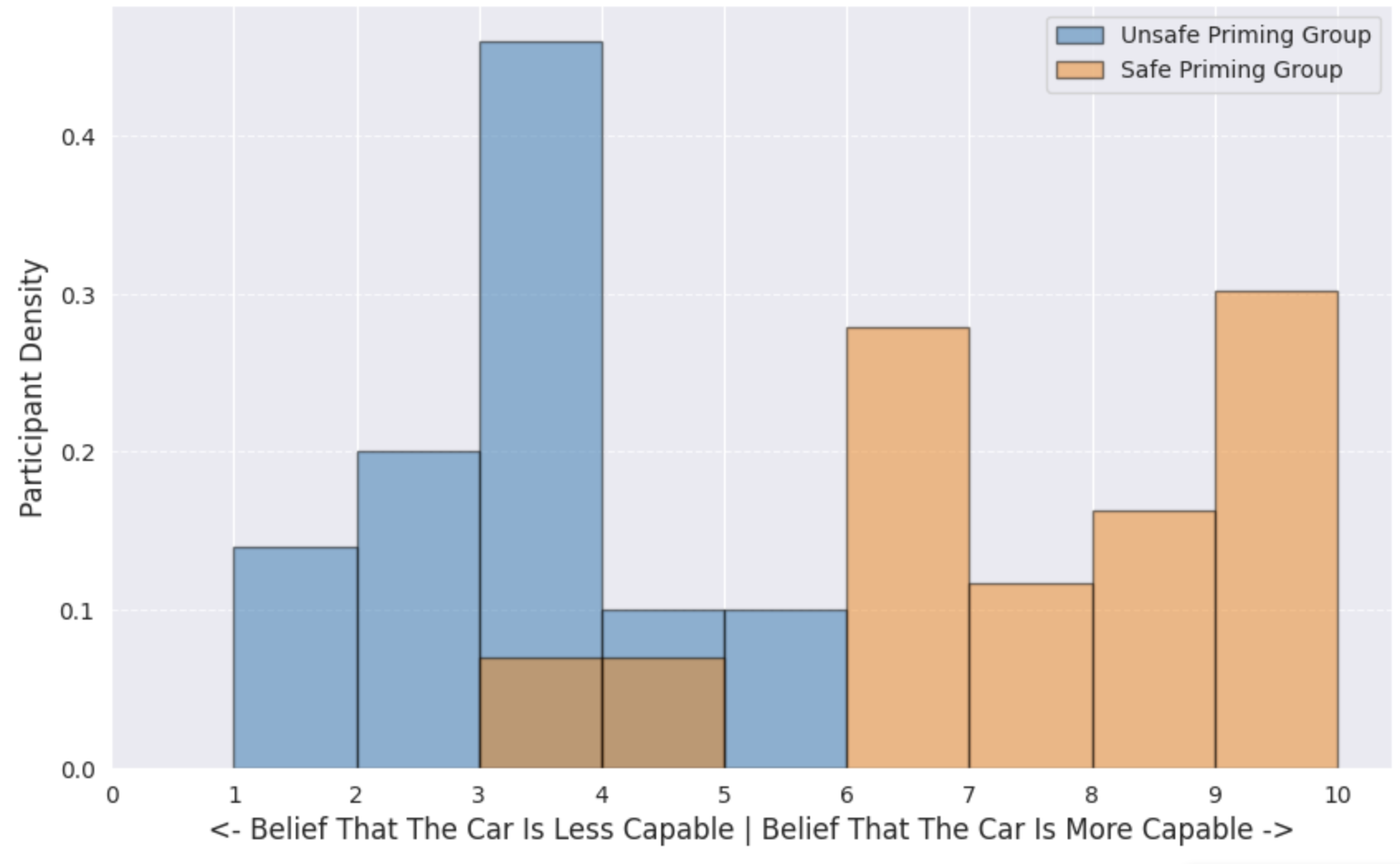}
\caption{Sentiment analysis on participants' written responses about the car's capabilities after priming, evaluated on a 1--10 scale by GPT-4. Histogram bars for both groups are plotted from 0, not stacked.
}
\label{fig:gpt4sentiment}
\end{figure}

Figure \ref{fig:gpt4sentiment} shows that immediately after priming, participants primed with the unsafe priming had lower sentiment scores regarding the car's capabilities than the participants in the safe priming group, suggesting that priming had an impact on their beliefs about the agent specifically. This establishes the first link in the causal chain we aim to demonstrate---that priming shifts beliefs about agent capabilities. The Kruskal--Wallis and Dunn--Bonferroni analysis of the Likert preference data below establishes the second link---that these shifted beliefs in turn affect labeling behavior.

\noindent \textbf{Quantitative results:}

\begin{wraptable}{l}{0.5\textwidth}
   \centering
   \caption{P-values for Dunn-Bonferroni for responses given with extremely strong confidence. P-values below 0.05 indicate statistical significance at the 5\% level. }
   \label{tab:dunnstest}
   \begin{adjustbox}{width=0.9\linewidth}
   \begin{tabular}{ccccc}
   \toprule
    & Unsafe Priming & No Priming & Safe Priming\\ \midrule

   Unsafe Priming& 1.000 & 0.229 & 0.015\\
   No Priming&0.229&1.000&0.847\\
   Safe Priming&0.015&0.847&1.000 \\\bottomrule
   \end{tabular}
   \end{adjustbox}
\end{wraptable}
Next, we analyzed the Likert data associated with the preferences of each condition group.
We used the non-parametric Kruskal-Wallis test for statistical analysis due to the ordinal nature of the data \cite{KruskalWallis}. The Kruskal-Wallis test resulted in a p-value of 0.033, indicating a statistically significant difference between at least two priming groups at the 5\% level. In order to detect which of the two priming groups differ, we used the Dunn's test with Bonferroni correction \cite{Dunn}. The results of the Dunn-Bonferroni test are shown in Table \ref{tab:dunnstest} and show statistically significant ($p=0.015$) differences between safe and unsafe priming groups at the 5\% level. Cliff's delta (d = 0.31) between unsafe vs safe priming shows that participants primed with unsafe priming were influenced toward more safe responses at a higher rate than participants primed with safe priming with a small effect size \cite{cliff}.
Participants in the no priming group did not exhibit statistically significant differences in their responses compared to either the unsafe or safe priming groups, as shown in Table \ref{tab:dunnstest}. These results suggest that human preferences are influenced by their beliefs about agent capabilities. To mitigate this bias, practitioners should actively attempt to strengthen agent-labeler agreement during data collection.
\vspace{-.25cm}

\subsection{Recommendations For Researchers} \label{sec:recommendationsforresearchers}

Our results show that having good agent-labeler agreement is important for effective policy learning in regret-based RLHF. While it is difficult to reason about an agent's post-training performance prior to training, our results suggest potential best practices for practitioners:

\textbf{(1) Inform labelers directly about known limitations:} Practitioners may choose to inform labelers about agent limitations prior to preference collections.
For example, this may include known data or training limitations, non-obvious restrictions on agent capabilities (e.g. an unusually limited turning radius on a vehicle), or assessments of projected agent strengths and weaknesses. However, the success of such an approach depends heavily on the ability of labelers to make sense of such information, which may require varying levels of technical expertise.

\textbf{(2) Online preference collection with intermittent priming:}
Practitioners may choose to use online preference collection interleaved with RLHF training on the collected preferences. Here, labelers' beliefs about the agent's capability may be continuously updated and aligned via priming using data from the agent's most current policy. This approach may also demonstrate to labelers how their preferences influence policy learning.

Both recommendations reduce to the same mechanism: exposing labelers to the agent's actual behavior before they provide preferences. Our human study is a proof of concept that this is feasible and effective---a single priming video shifted preferences significantly (Table~\ref{tab:dunnstest})---and the online loop of recommendation (2) adds only one step to a standard pipeline: generating demonstrations, which requires merely running the current policy. These rollouts need not come from a post-RLHF policy; the agent's policy at any training stage is informative about its current capabilities. What constitutes an informative demonstration is domain-specific (e.g., for a language-model agent), but the mechanism of calibrating beliefs through exposure to behavior is general.

\textbf{Truthful calibration versus manipulation.} Because priming can shift preferences, calibration must be distinguished from manipulation: the difference is whether the demonstrations are representative of the agent's actual on-policy behavior. Truthful calibration shows unfiltered rollouts reflecting real capabilities and limitations; the same interface could instead curate a misleading picture (e.g., hiding failure modes) to steer labels toward a preferred outcome. We therefore recommend sampling demonstrations representatively from the current policy, as non-representative behavior undermines both labeler autonomy and the reliability of the reward signal.

\section{Conclusion}
Current methods of preference collection do not account for participants' beliefs about the capabilities of the agents learning from their preferences. We show that the preferences collected under  agent-labeler disagreements can lead to suboptimal policies both theoretically and empirically and make preliminary suggestions on how to minimize such disagreements. Future work may include developing improved priming strategies to ensure human beliefs are as accurate as possible, as well as algorithmic advances that mitigate the impact of incorrect beliefs on RLHF.

\textbf{Limitations and open directions.} Each part of our analysis suggests a natural next step. Theorem~\ref{thm:lower_bound_on_error} is proved in a clean tabular setting with noiseless preferences; Appendix~\ref{sec:appendix_theory} relaxes the noiseless assumption, and extending it to function approximation---where a belief perturbation can propagate across states through shared parameters---is a promising direction for sharper guarantees. Empirically, our synthetic experiments isolate the effect under controlled conditions and our human study shows that priming shifts both beliefs and preferences; validating gains in a full RLHF pipeline driven end-to-end by human labels, which would require a substantially larger preference budget, is an exciting avenue for follow-up work.

\subsubsection*{Acknowledgments}
This work has taken place in the Safe, Correct, and Aligned Learning and Robotics Lab (SCALAR) at The University of Massachusetts Amherst. SCALAR research is supported in part by the NSF (IIS-2437426) and Open Philanthropy.

Scott Niekum holds concurrent appointments as an Associate Professor at the University of Massachusetts Amherst and as an Amazon Scholar. This paper describes work performed at the University of Massachusetts Amherst and is not associated with Amazon.

\bibliography{mybib}
\bibliographystyle{tmlr}

\appendix
\section{Proofs}\label{sec:appendix_theory}

Proof of the corollary~\ref{thm:multiple_disagreements_bound}:
\begin{proof}
    Theorem~\ref{thm:lower_bound_on_error} states that the error bound on the expected return of a post-RLHF policy learned with preferences given under a single agent-labeler agreement at state $(s', a')$ of magnitude $\delta$ is:

    \begin{equation}
        J_{\pi_{\text{post}}^\delta} \geq J_{\pi_{\text{post}}^*} -\frac{\delta}{1 - \gamma}. \nonumber
    \end{equation}

    This result can be interpreted as, in the worst case, the agent takes a sub-optimal action of magnitude $\delta$ at state $s'$ from the start of the episode until the end of horizon.
    Consider the case of multiple disagreements where we have state-action pair $(s'', a'')$ where we have disagreement of magnitude $\max \{\delta_1, \delta_2, ..., \delta_k\}$ . In the worst case, the agent will be stuck at state $(s'', a'')$ from the beginning of the episode until of the horizon. This results in following bound on expected returns:
    \begin{equation}
        J_{\pi_{\text{post}}^\delta} \geq J_{\pi_{\text{post}}^*} -\frac{\max \{\delta_1, \delta_2, ..., \delta_k\}}{1 - \gamma}. \nonumber
    \end{equation}
\end{proof}

\subsection{A visitation-weighted bound for multiple disagreements}\label{sec:appendix_visitation}

The bound in Corollary~\ref{thm:multiple_disagreements_bound} is deliberately assumption-free: it replaces every disagreement by the largest one and every visitation probability by its upper bound of $1$. When the visitation distribution of the perturbed policy is available, the same argument yields a tighter, visitation-weighted bound that makes explicit how disagreements at different states interact through the MDP.

\begin{proposition}\label{prop:visitation_bound}
Consider the setting of Theorem~\ref{thm:lower_bound_on_error}, but with agent-labeler disagreements of magnitudes $\delta_1, \delta_2, \dots, \delta_k$ at distinct state-action pairs $(s_1', a_1'), \dots, (s_k', a_k')$, each large enough to flip the post-RLHF action at its state. Let $\pi_{\text{post}}^\delta$ be the resulting post-RLHF policy and $d_\mu^{\pi_{\text{post}}^\delta}$ its discounted state-visitation distribution. Then
\begin{equation}
    J_{\pi_{\text{post}}^\delta} \geq J_{\pi_{\text{post}}^*} - \frac{1}{1-\gamma} \sum_{i=1}^{k} d_\mu^{\pi_{\text{post}}^\delta}(s_i')\, \delta_i .
\end{equation}
\end{proposition}

\begin{proof}
As in the proof of Theorem~\ref{thm:lower_bound_on_error}, when RLHF respects the preferences it returns the deterministic policy $\pi_{\text{post}}^\delta(s) = \argmax_a Q^{\pi_{\text{belief}}}(s,a)$, which now differs from $\pi_{\text{post}}^*$ at exactly the $k$ states $s_1', \dots, s_k'$ where a disagreement flips the preferred action. Applying the Performance Difference Lemma~\citep{Kakade2002ApproximatelyOA} and using that both policies are deterministic,
\begin{align}
    J_{\pi_{\text{post}}^\delta} - J_{\pi_{\text{post}}^*}
    &= \frac{1}{1-\gamma} \mathbb{E}_{s \sim d_\mu^{\pi_{\text{post}}^\delta}} \left[ Q^{\pi_{\text{post}}^*}\!\big(s, \pi_{\text{post}}^\delta(s)\big) - Q^{\pi_{\text{post}}^*}\!\big(s, \pi_{\text{post}}^*(s)\big) \right] \nonumber \\
    &= \frac{1}{1-\gamma} \sum_{i=1}^{k} d_\mu^{\pi_{\text{post}}^\delta}(s_i') \left[ Q^{\pi_{\text{post}}^*}\!\big(s_i', a_i'\big) - Q^{\pi_{\text{post}}^*}\!\big(s_i', \pi_{\text{post}}^*(s_i')\big) \right],
\end{align}
where the sum is over only the $k$ states at which the policies differ, since the bracketed term is zero elsewhere. By the definition of agent-labeler disagreement, each bracketed term is at least $-\delta_i$, giving
\begin{equation}
    J_{\pi_{\text{post}}^\delta} - J_{\pi_{\text{post}}^*} \geq -\frac{1}{1-\gamma} \sum_{i=1}^{k} d_\mu^{\pi_{\text{post}}^\delta}(s_i')\, \delta_i . \nonumber
\end{equation}
\end{proof}

This form makes the interaction between disagreements transparent: a disagreement at a rarely-visited state $s_i'$ (small $d_\mu^{\pi_{\text{post}}^\delta}(s_i')$) contributes proportionally little to the performance loss, whereas a disagreement on the agent's main path is weighted heavily. Corollary~\ref{thm:multiple_disagreements_bound} is recovered by bounding each $d_\mu^{\pi_{\text{post}}^\delta}(s_i') \leq 1$ and each $\delta_i \leq \max_j \delta_j$, together with $\sum_i d_\mu^{\pi_{\text{post}}^\delta}(s_i') \leq 1$.

\subsection{A generalized bound for noisy preferences}\label{sec:appendix_generalized}

Theorem~\ref{thm:lower_bound_on_error} assumes noiseless preferences ($\alpha \to \infty$), which yield a deterministic post-RLHF policy. We now relax this assumption and derive a bound for finite $\alpha$, where preferences are noisy and the post-RLHF policy is stochastic. As noted in Remark~\ref{rem:generalization}, this bound recovers Theorem~\ref{thm:lower_bound_on_error} in the $\alpha \to \infty$ limit. We emphasize that the quantities below (in particular the advantage magnitude $A_{\max}$) serve a purely analytical role and are not intended to be computed in practice.

\begin{proposition}\label{prop:generalized_bound}
Consider the setting of Theorem~\ref{thm:lower_bound_on_error} with a single agent-labeler disagreement of magnitude $\delta$ at $(s', a')$, but with a finite inverse-temperature $\alpha$ in the belief-based preference model~(Eqn.~\ref{eqn:beliefBT}). Suppose RLHF returns the softmax policy consistent with the resulting pairwise preferences (Luce choice axiom~\citep{luce1959}), $\pi(a \mid s) \propto \exp\!\big(\alpha\, Q^{\pi_{\text{belief}}}(s,a)\big)$, and let $\pi_{\text{post}}^\delta$ denote this policy under the labeler's belief. Let $p = \pi_{\text{post}}^*(a' \mid s')$ be the probability the ideal policy assigns to $a'$ at $s'$, and let $A_{\max}(s') = \max_a \big| A^{\pi_{\text{post}}^*}(s', a) \big|$. Then
\begin{equation}
    J_{\pi_{\text{post}}^\delta} \geq J_{\pi_{\text{post}}^*} - \frac{2\, d_\mu^{\pi_{\text{post}}^\delta}(s')\, A_{\max}(s')\, p(1-p)\big(e^{\alpha\delta} - 1\big)}{(1-\gamma)\big(1 + p(e^{\alpha\delta} - 1)\big)} .
\end{equation}
\end{proposition}

\begin{proof}
Under the Luce choice axiom, a disagreement of magnitude $\delta$ raises the belief-value of $a'$ at $s'$ by $\delta$ relative to the ideal belief, i.e. $Q^{\pi_{\text{belief}}}(s',a') = Q^{\pi_{\text{belief}}^*}(s',a') + \delta$, while leaving the other belief-values at $s'$ unchanged. The resulting softmax probability the perturbed policy places on $a'$ is
\begin{equation}
    \pi_{\text{post}}^\delta(a' \mid s') = \frac{p\, e^{\alpha\delta}}{1 + p\big(e^{\alpha\delta} - 1\big)},
\end{equation}
where $p = \pi_{\text{post}}^*(a' \mid s')$ is the corresponding probability under the ideal belief. Hence the increase in probability mass on $a'$ is
\begin{equation}\label{eq:delta_p}
    \Delta p \;:=\; \pi_{\text{post}}^\delta(a' \mid s') - p \;=\; \frac{p(1-p)\big(e^{\alpha\delta} - 1\big)}{1 + p\big(e^{\alpha\delta} - 1\big)} \;\geq\; 0 ,
\end{equation}
and this mass is removed from the other actions at $s'$. The two policies differ only at $s'$, so by the Performance Difference Lemma~\citep{Kakade2002ApproximatelyOA},
\begin{align}
    J_{\pi_{\text{post}}^\delta} - J_{\pi_{\text{post}}^*}
    = \frac{d_\mu^{\pi_{\text{post}}^\delta}(s')}{1-\gamma}\, \mathbb{E}_{a \sim \pi_{\text{post}}^\delta(\cdot \mid s')}\!\left[ A^{\pi_{\text{post}}^*}(s', a) \right],
\end{align}
since $\mathbb{E}_{a \sim \pi_{\text{post}}^*}[A^{\pi_{\text{post}}^*}(s',a)] = 0$ and the advantage term vanishes at every state other than $s'$. Because $\pi_{\text{post}}^\delta$ and $\pi_{\text{post}}^*$ agree at $s'$ except for the mass $\Delta p$ shifted onto $a'$ from other actions, the expected advantage differs from zero by at most $2 A_{\max}(s')\, \Delta p$ in magnitude:
\begin{equation}
    \Big| \mathbb{E}_{a \sim \pi_{\text{post}}^\delta(\cdot \mid s')}\!\left[ A^{\pi_{\text{post}}^*}(s', a) \right] \Big| \;\leq\; 2\, A_{\max}(s')\, \Delta p .
\end{equation}
Combining this with the Performance Difference Lemma and substituting $\Delta p$ from Eqn.~\eqref{eq:delta_p} gives
\begin{equation}
    J_{\pi_{\text{post}}^\delta} - J_{\pi_{\text{post}}^*} \;\geq\; -\frac{2\, d_\mu^{\pi_{\text{post}}^\delta}(s')\, A_{\max}(s')\, p(1-p)\big(e^{\alpha\delta} - 1\big)}{(1-\gamma)\big(1 + p(e^{\alpha\delta} - 1)\big)} . \nonumber
\end{equation}
\end{proof}

As $\alpha \to \infty$, the factor $\Delta p \to 1 - p$, so $\pi_{\text{post}}^\delta(a' \mid s') \to 1$ and the softmax policy collapses to the deterministic action switch of Theorem~\ref{thm:lower_bound_on_error}; the analysis then reduces to that theorem, and under its assumptions the bound simplifies to $\delta/(1-\gamma)$, which avoids the dependence on $A_{\max}(s')$. We note that the generalized bound above is intended to give qualitative intuition for how performance degrades under relaxed assumptions rather than to be a tight characterization.

\section{Additional Details About Human Study}
In this section, we provide more details about our human study and the demographics of participants.

\subsection{Design Decisions} \label{sec:humanstudy_design_decisions}
We chose self-driving cars as the domain on which to collect our preference dataset because the rules and trade-offs of driving are something people understand without high level knowledge about any language or formal higher education and allowed for minimal filtering of participants. We conducted these preference collection experiments with prior IRB approval.

\subsection{Data Collection UI} \label{sec:humanstudy_ui}
Figure \ref{fig:regretui} shows the format of the survey that participants saw when giving their responses.

\begin{figure}[H]
    \centering
    \fbox{\includegraphics[width=.9\textwidth]{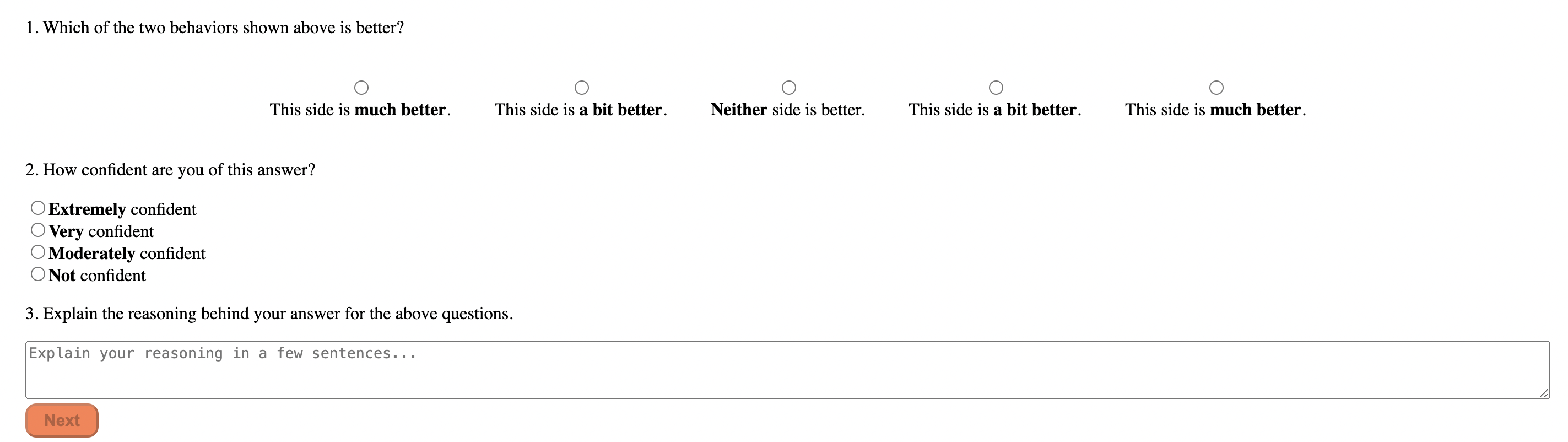}}
    \caption{The format of questions asked to participants in the survey. In order to avoid order biasing, we avoided terms such as ``first/second", ``left/right", or arrows and opted to use labels to indicate which side they prefer more. Subjects are also asked to report confidence in their response and a reasoning for their response. Subjects are required to respond to all three questions for each preference pair before they are allowed to move to the next pair in order to avoid confusion.}
    \label{fig:regretui}
\end{figure}

\subsection{Example of a Pair of Trajectories Shown to Human Labelers}\label{sec:appendix_trajectories}
All trajectories shown to participants show legal driving. Each pair of trajectories show the car driving along the same route for the same amount of time so participants can easily distinguish the tradeoff between safety and time savings. For example, one pair of scenarios shows a legal overtake of a slow car in a single lane road, as shown in Figure \ref{fig:exampletrajectoriescarla}. While it is legal and safe to perform the overtake, doing so requires trust in the car's abilities to successfully return back to its lane without any accidental swerving or any other mishaps. The pair shows one trajectory where the car does not attempt this overtake and instead patiently stays behind the slow car, as shown in Figure \ref{fig:carlasafetyvideo}; the other shows the car overtaking successfully and getting further along the road in the same amount of time, as shown in Figure \ref{fig:carlatimevideo}. This allows room for the subject to interpret the quality of the car's driving for the overtake and be subjective in their belief in the car's abilities.

\begin{figure}[H]
    \centering
    \begin{subfigure}[t]{0.45\textwidth}
        \centering
        \includegraphics[width=\linewidth]{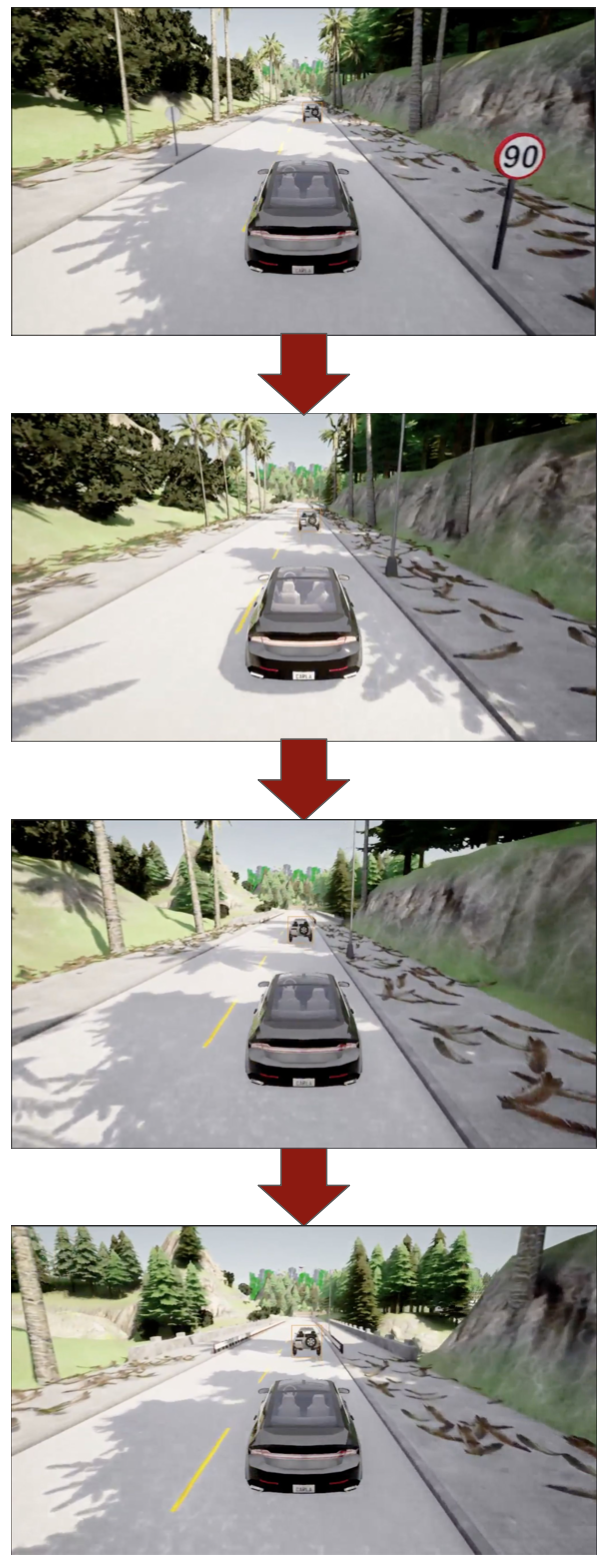}
        \caption{An example of a trajectory showing safety-conscious driving behavior. Here, the car chooses to stay behind the slower driver and finishes the trajectory near the start of the bridge.}
        \label{fig:carlasafetyvideo}
    \end{subfigure}
    \hspace{0.05\textwidth}
    \begin{subfigure}[t]{0.45\textwidth}
        \centering
        \includegraphics[width=\linewidth]{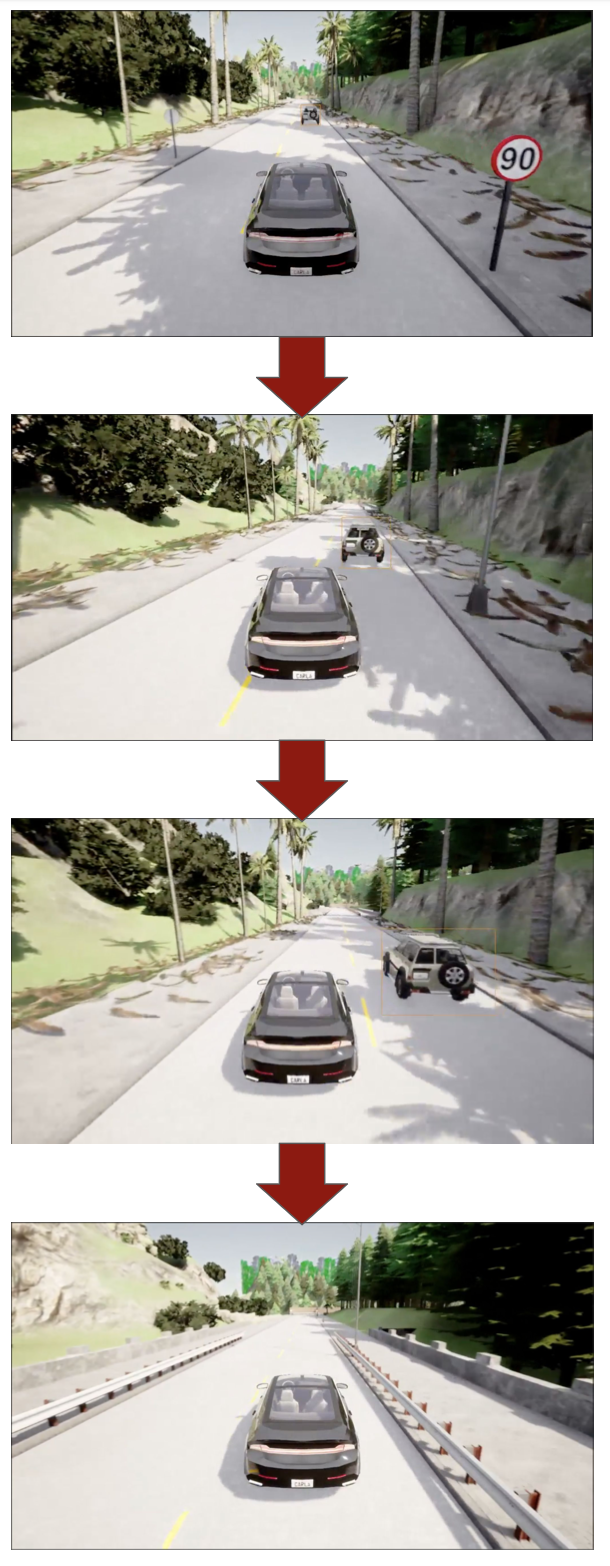}
        \caption{An example of a trajectory showing time-saving driving behavior. Here, the car chooses to legally overtake the slow driver and finishes the trajectory near the end of the bridge.}
        \label{fig:carlatimevideo}
    \end{subfigure}

    \caption{An example pair of trajectories as shown to human labelers. Both trajectories show the car driving along the same path for the same amount of time.}
    \label{fig:exampletrajectoriescarla}
\end{figure}

\subsection{Demographics}\label{sec:humanstudy_demographics}

Figure \ref{fig:experience} shows the distribution of driver experience among the participants of our survey and Figure \ref{fig:willing} shows participant's willingness to ride in a self driving car. While we did not have specific filters for years of driving experience, most of the participants report a significant number of years of driving experience, meaning that they are well versed in the domain. A large portion of participants from all three priming groups reported that they would be willing to ride in a self driving car. However, out of all participants who reported that they would be unwilling to ride in a self driving car, the largest group of participants was the unsafe priming group.

\begin{figure}[ht]
    \centering
    \begin{subfigure}[t]{0.5\textwidth}
        \centering
        \includegraphics[width=1\textwidth]{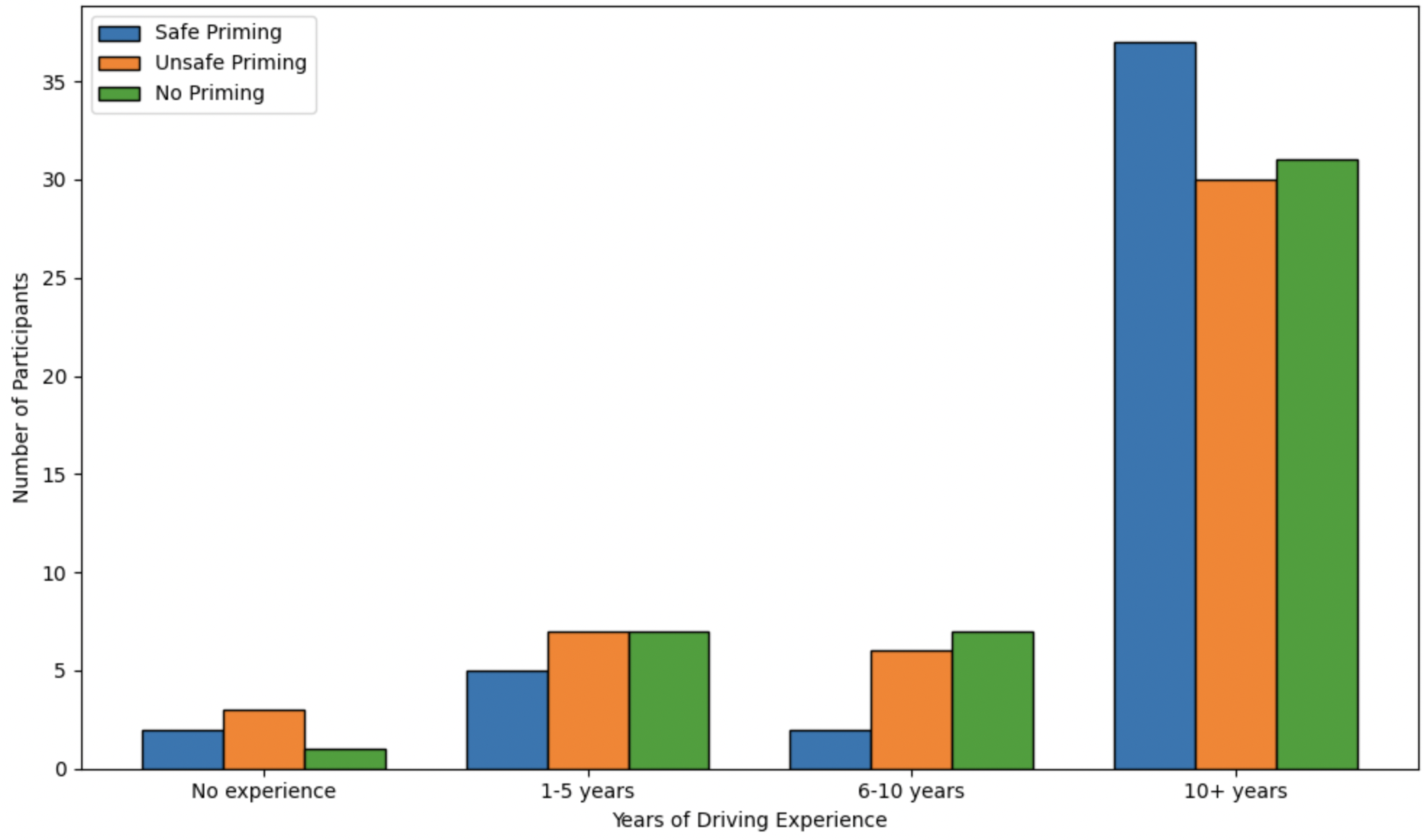}
        \caption{Participant's driving years of experience.}
        \label{fig:experience}
    \end{subfigure}%
    \hfill
    \begin{subfigure}[t]{0.5\textwidth}
        \centering
        \includegraphics[width=0.82\textwidth]{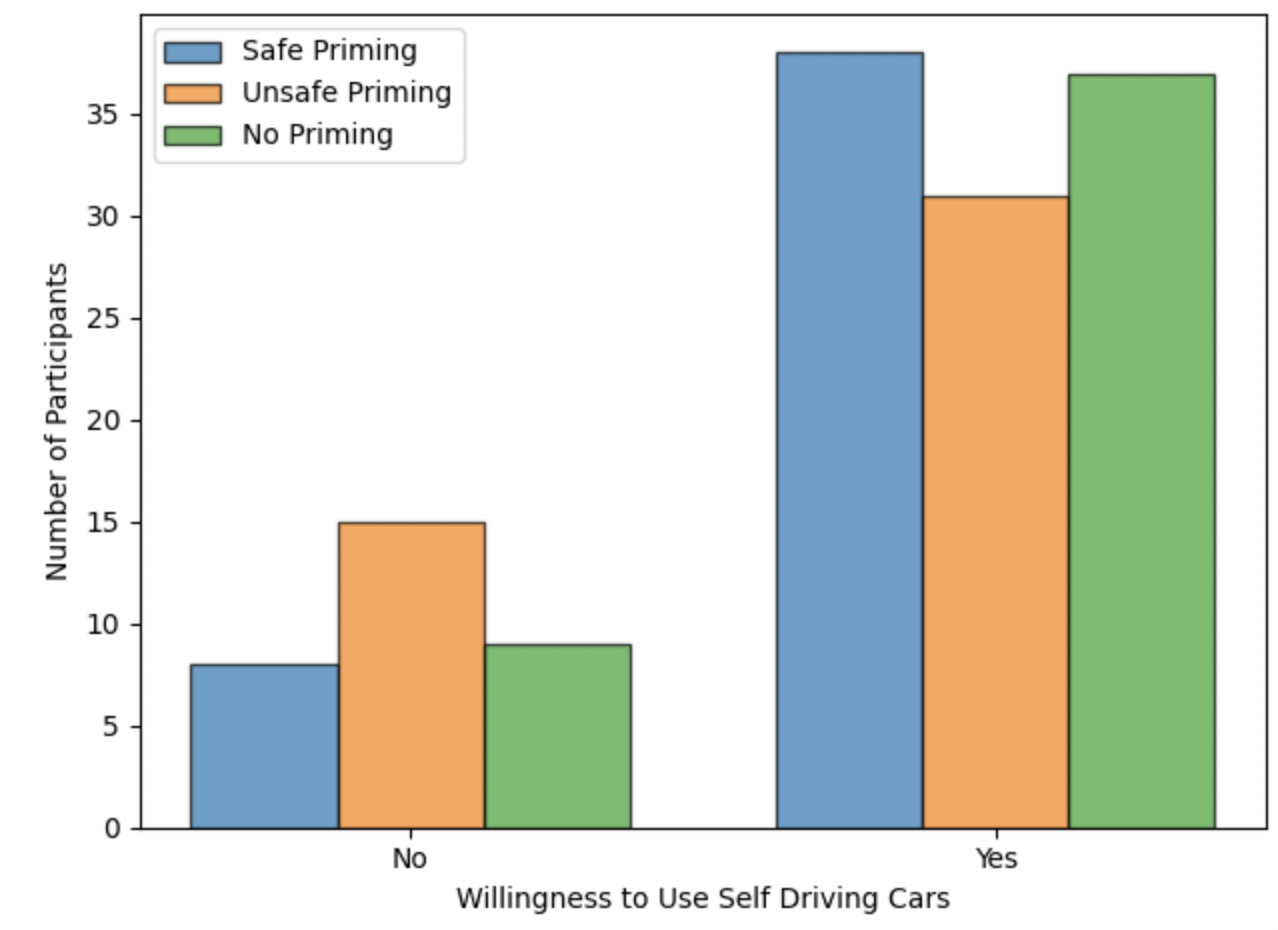}
        \caption{Participant's willingness to use self-driving cars.}
        \label{fig:willing}
    \end{subfigure}
    \caption{Demographics of human labelers}
\end{figure}

\section{GridWorld Experimental Details}
In this section, we provide additional details about the GridWorld domain and the hyperparameters used in the learning algorithm. For this analysis, we used the following python libraries: numpy \cite{numpy}, math and random from the standard python library \cite{python}, and scipy\cite{scipy}.

\subsection{Gridworld Domain} \label{sec:gridworld_domain}
The gridworld domain consists of a 7x7 grid of cells. We number the coordinates from 0 to 6. The agent always starts in the cell numbered (6,6). From there, the agent may move in any of the four cardinal directions. Any movement by the agent that results in the agent moving outside the boundaries of the environment results in the agent staying in its original state with an additional reward of -1. The states (0, 6), (3, 6), and (4, 6) are terminal states and result in a reward of +200, -200, and -200 respectively. The episode terminates when the agent either reaches one of the terminal states or when the number of states that the agent has visited reaches 1,000.

\subsection{Hyperparameters} \label{sec:gridworld_hyperparameters}
Here, we will discuss the hyperparameters used in the gridworld experiment for the Contrastive Preference learning algorithm:
\begin{table}[H]
    \centering
    \begin{tabular}{ll}
        \hline
        \textbf{Hyperparameter} & \textbf{Value} \\
        \hline
        Discount Factor & 0.7 \\
        Regularization ($\gamma$) & 0.01 \\
        Learning Rate & 0.5 \\
        Temperature parameter ($\alpha$) & 10 \\
        Number of random seeds & 20 \\
        Epochs & 20 \\
        \hline
    \end{tabular}
    \caption{Hyperparameters used in the gridworld experiment}
    \label{tab:gridworld_hyperparams}
\end{table}

\section{Disclosure about LLM usage}
We used LLM to format the content of the paper and to make any stylistic changes to the paper.

\end{document}